\newtheorem{theorem}{Theorem}
\newtheorem{corollary}[theorem]{Corollary}
\newtheorem{definition}{Definition}
\theoremstyle{definition}
\theoremstyle{remark}
\newcommand{\R}{\mathbb{R}}
\newcommand{\E}{\mathbb{E}}
\renewcommand{\P}{\Pr}
\newcommand{\dee}{\mathrm{d}}
\newcommand{\Nei}{\mathcal{N}}
\renewcommand{\deg}{d}
\newcommand{\sign}{\mathrm{sgn}}
\renewcommand{\vec}{\ensuremath\bm}
\renewcommand{\vec}[1]{\ensuremath{\mathbf{#1}}}
\newcommand{\mat}[1]{\ensuremath{\mathbf{#1}}}
\title{Additive function approximation in the brain}
\author{Kameron Decker Harris\\
  Paul G.\ Allen School of Computer Science and Engineering, Department of Biology\\
  University of Washington\\
  \texttt{kamdh@uw.edu}
}
\date{\today}
\begin{document}

\maketitle

\begin{abstract}
  Many biological learning systems 
  such as the mushroom body, hippocampus, and cerebellum
  are built from sparsely connected networks of neurons.
  For a new understanding of such networks, 
  we study the function spaces induced by sparse random features and
  characterize what functions may and may not be learned.
  A network with $d$ inputs per neuron is found to be equivalent
  to an additive model of order $d$,
  whereas with a degree distribution the network combines
  additive terms of different orders.
  % We prove uniform rates of convergence 
  % for a large class of smooth random features, 
  % not only sparse ones,
  % to their limiting kernels.
  % For particular weight distributions and nonlinearities,
  % we derive new induced kernels in the very sparse limit.
  We identify three specific advantages of sparsity:
  additive function approximation is a powerful 
  inductive bias that limits the curse of dimensionality,
  sparse networks are stable to outlier noise in the inputs, and
  sparse random features are scalable.
  Thus, even simple brain architectures
  can be powerful function approximators.
  Finally, we hope that this work helps
  popularize kernel theories of networks among computational neuroscientists.
\end{abstract}

\begingroup
\let\clearpage\relax
\section{Introduction}

Kernel function spaces are popular among machine learning researchers
as a potentially tractable framework for 
understanding artificial neural networks trained via gradient descent 
\citep[e.g.][]{bach2017,jacot2018,chizat2018,mei2018,rotskoff2018,venturi2018}.
Artificial neural networks 
are an area of intense interest due to their
often surprising empirical performance on a number of challenging problems
and our still incomplete theoretical understanding.
Yet computational neuroscientists have not 
widely applied these new theoretical tools
to describe the ability of biological networks
to perform function approximation.

The idea of using fixed random weights in a neural network is primordial,
and was a part of Rosenblatt's perceptron model of the retina \citep{rosenblatt1958}.
Random features have then resurfaced under many guises:
random centers in radial basis function networks \citep{broomhead1988},
functional link networks \citep{igelnik1995},
Gaussian processes (GPs) \citep{neal1996,williams1997},
and so-called extreme learning machines \citep{wang2008};
see \citep[][]{scardapane2017} for a review.
%The equivalence of infinite random feature networks to kernels
%was also known earlier in the Gaussian process (GP) literature.
Random feature networks, 
where the neurons are initialized with random weights and only the 
readout layer is trained, 
were proposed by Rahimi and Recht in order to improve the performance of kernel methods 
\citep{rahimi2008,rahimi2008a}
and can perform well for many problems \citep{scardapane2017}.

In parallel to these developments in machine learning, 
computational neuroscientists have also studied the properties of random networks
with a goal towards understanding neurons in real brains.
To a first approximation, many neuronal circuits seem to be randomly organized
\citep{ganguli2012,caron2013,caron2013a,harris2017,litwin-kumar2017}.
However, the recent theory of random features appears to be mostly unknown to the greater 
computational neuroscience community.

Here, we study random feature networks with {\em sparse connectivity}:
the hidden neurons each receive input from a random, sparse subset of input neurons.
This is inspired by the observation that the connectivity in a variety of 
predominantly feedforward brain networks is
approximately random and sparse.
These brain areas include the cerebellar cortex, invertebrate mushroom body, and 
dentate gyrus of the hippocampus \citep{cayco-gajic2019}.
All of these areas perform pattern separation and associative learning.
The cerebellum is important for motor control, 
while the mushroom body and dentate gyrus 
are general learning and memory areas for invertebrates and vertebrates, respectively,
and may have evolved from a similar structure in the ancient bilaterian ancestor
\citep{wolff2016}.
Recent work has argued that the sparsity observed in these areas may be optimized to
balance the dimensionality of representation 
with wiring cost \citep{litwin-kumar2017}.
Sparse connectivity has been used to compress neural networks 
and speed up computation \citep{han2015,han2015a,wen2016},
whereas convolutions are a kind of structured sparsity
\citep{mairal2014,jones2019}.
%However, in these cases the sparsity pattern is non-random.

We show that sparse random features approximate additive kernels
\citep{wahba1990,bach2009,duvenaud2011,kandasamy2016}
with arbitrary orders of interaction.
The in-degree of the hidden neurons $d$ sets the order of interaction.
When the degrees of the neurons are drawn from a distribution,
the resulting kernel contains a weighted mixture of interactions.
These sparse features offer advantages of
generalization in high-dimensions, stability under perturbations of their input,
and computational and biological efficiency.

\begin{figure}[t!]
  \centering
  \includegraphics[width=\linewidth]{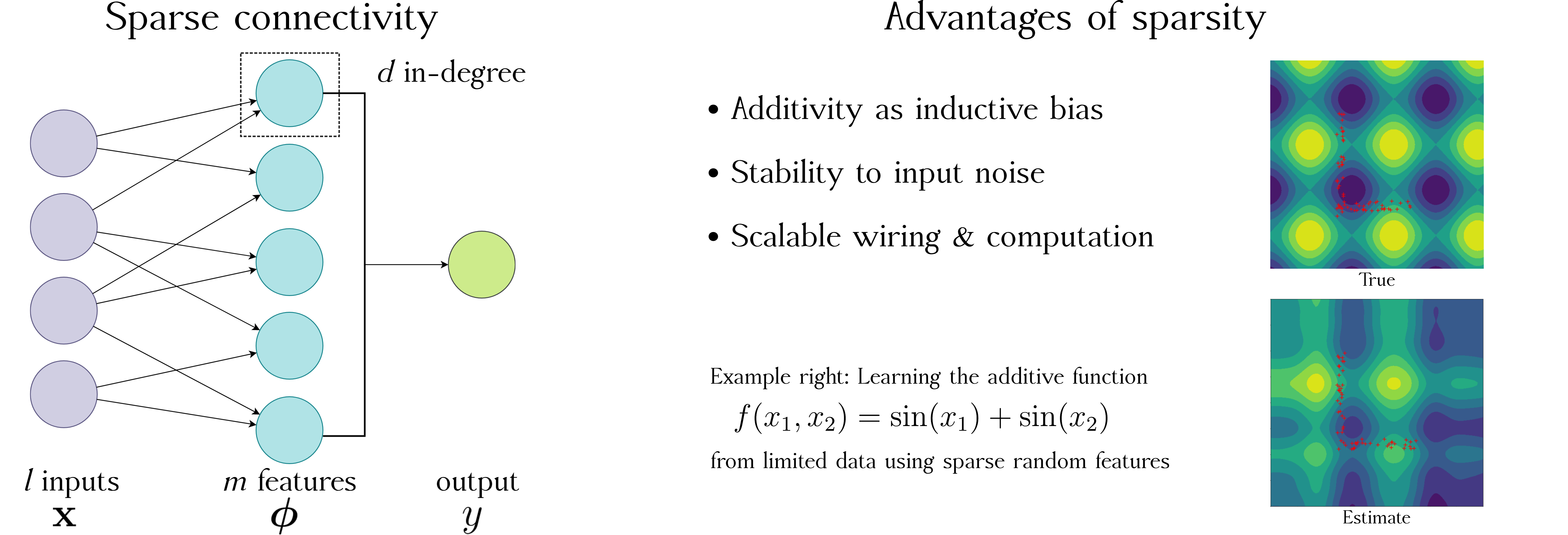}
  \vspace{-1em}
  \caption{Sparse connectivity in a shallow neural network. 
    The function shown is the sparse random feature approximation to an additive sum of sines,
    learned from poorly distributed samples (red crosses).
    Additivity offers structure which may be leveraged for fast and efficient learning.}
  \label{fig:summary}
  \vspace{-1em}
\end{figure}

\section{Background: Random features and kernels}

Now we will introduce the mathematical setting and review how
random features give rise to kernels.
The simplest artificial neural network contains a single hidden layer,
of size $m$, receiving input from a layer of size $l$ (Figure~\ref{fig:summary}).
The activity in the hidden layer is given by, for $i \in [m]$,
\begin{equation}
  \label{eq:features}
  \phi_i(\vec{x}) = h( \vec{w}^\intercal_i \vec{x} + b_i) .
\end{equation}
Here each $\phi_i$ is a feature in the hidden layer, 
$h$ is the nonlinearity,
$\mat{W}=(\vec{w}_1, \vec{w}_2, \ldots, \vec{w}_m) \in \R^{l \times m}$ 
are the input to mixed weights,
and $\vec{b} \in \R^m$ are their biases.
We can write this in vector notation as 
$\bm{\phi}(\vec{x}) = h( \mat{W}^\intercal \vec{x} - \vec{b})$, where
$\bm{\phi}: \R^l \to \R^m$.

Random features networks are draw their
input-hidden layer weights at random.
Let the weights $\mat{W}$ and biases $\vec{b}$ in the feature expansion
\eqref{eq:features}
be sampled i.i.d.\ from a distribution $\mu$ on $\R^{l+1}$.
Under mild assumptions, the inner product of the feature vectors for two inputs
converges to its expectation
\begin{align}
\frac{1}{m} \bm{\phi}(\vec{x})^\intercal \bm{\phi}(\vec{x}') 
%=  
%\frac{1}{m} \sum_{i=1}^m \phi_i( \vec{x}) \phi_i( \vec{x}' ) 
%&
\xrightarrow{ m \to \infty} 
\E \left[ \phi(\vec{x}) \phi(\vec{x}') \right] 
%\nonumber \\
%& \qquad = 
%\E \left[ h(\vec{w}^\intercal \vec{x} + b) h(\vec{w}^\intercal \vec{x}' + b) \right] 
%\nonumber \\
%& \qquad 
=
\int 
h(\vec{w}^\intercal \vec{x} + b) h(\vec{w}^\intercal \vec{x}' + b)\, \dee \mu(\vec{w},b) 
%\nonumber \\
%& \qquad 
:= k(\vec{x}, \vec{x}').   \label{eq:rf_kernel}
\end{align}
We identify the limit \eqref{eq:rf_kernel} with a reproducing kernel
$k(\vec x, \vec x')$
induced by the random features, since
the limiting function is an inner product and thus always positive semidefinite
\citep{rahimi2008}.
The kernel defines an associated reproducing kernel Hilbert space (RKHS) of functions.
For a finite network of width $m$,
the inner product
$\frac{1}{m} \bm\phi(\vec{x})^\intercal \bm\phi(\vec{x}')$
is a randomized approximation to the kernel $k(\vec x, \vec x')$.
% As explained by \citet[Section 2.2]{bach2017a}, 
% expectation with respect to this measure $\mu$ 
% and nonlinearity $h(\cdot)$
% leads to another integral operator which is a particular square root of $T$
% (see \eqref{eq:Sigma_op} in Appendix~\ref{sec:kernel_spaces}).
% Matrix and operator square roots are, in general, non-unique.
% Therefore, different random feature expansions may lead to the same kernel.
% \citet{bach2017a} interprets these as different quadrature rules for approximating 
% the action of $T$.

\section{Sparsely connected random feature kernels}

We now turn to our main result:
the general form of the random feature kernels 
with sparse, independent weights.
For simplicity, we start with a regular model and then generalize
the result to networks with varying in-degree.
Two kernels that can be computed in closed form are highlighted.

%\subsection{Regular degree model}

Fix an in-degree $\deg$, where $1 \leq \deg \leq l$,
and let $\mu | d$ be a distribution on $\R^d$
which induce, 
together with some nonlinearity $h$,
the kernel 
$k_\deg(\vec z, \vec z')$ on $\vec z, \vec z' \in \R^d$
(for the moment, $d$ is not random).
Sample a sparse feature $i \in [m]$ in two steps:
First,
pick $\deg$ neighbors from all $\binom{l}{\deg}$ uniformly at random.
Let $\Nei_i \subseteq [l]$ denote this set of neighbors.
Second, 
sample $w_{ji} \sim \mu | d$ if $j \in \Nei_i$
and otherwise set $w_{ji} = 0$.
We find that the resulting kernel
\begin{equation}
  \label{eq:K_reg}
  k_\deg^\mathrm{reg} ( \vec{x}, \vec{x}' ) 
  = \E [ \E [ \phi( \vec{x}_\Nei ) \phi( \vec{x}'_\Nei ) | \Nei ] ]
  = 
  { \binom{l}{\deg}^{-1} } \sum_{\Nei: |\Nei| = \deg} k_d( \vec{x}_\Nei, \vec{x}'_\Nei ) .
\end{equation}
Here $\vec{x}_\Nei$ denotes the length $\deg$ vector of $\vec{x}$ restricted
to the neighborhood $\Nei$, with the other $l - \deg$ entries in $\vec{x}$ ignored.

%\subsection{Degree distribution model}

More generally, the in-degrees may be
chosen independently according to a {\it degree distribution},
so that $\deg$ becomes a random variable.
Let $D(d)$ be the probability mass function of the hidden node in-degrees.
Conditional on node $i$ having degree $d_i$,
the in-neighborhood $\Nei_i$ is chosen uniformly 
at random among the $\binom{l}{d_i}$ possible sets.
Then the induced kernel becomes
\begin{equation}
  \label{eq:K_deg_dist}
  k_D^\mathrm{dist} (\vec{x}, \vec{x}') 
  =  \E [ \E [ \phi( \vec{x}_\Nei ) \phi( \vec{x}'_\Nei ) | \Nei, d ] ]
  %= \E [ k_d^\mathrm{reg} ( \vec x, \vec x' ) ]
  = \sum_{\deg=0}^l D(d) \, k_\deg^\mathrm{reg}( \vec{x}, \vec{x}') .
\end{equation}
For example, if every layer-two node chooses its inputs independently with probability $p$,
the $D(\deg_i)$ is the probability mass function of the binomial distribution
$\mathrm{Bin}(l,p)$. 
The regular model \eqref{eq:K_reg} is a special case of \eqref{eq:K_deg_dist}
with $D(d') = \mathbb{I}{\{ d' = d \}}$.
Extending the proof techniques in \citep{rahimi2008} yields:

%{\bf Convergence rate}
{\bf Claim }
{\it 
The random map $\frac{1}{m} \bm \phi(\vec x)^\intercal \bm \phi(\vec x')$ 
with $\kappa$-Lipschitz nonlinearity
uniformly approximates 
$k_D^\mathrm{dist}(\vec x, \vec x')$ 
to error $\epsilon$ 
using 
$m= \Omega ( \frac{l \kappa^2}{\epsilon^2} \log \frac{C}{\epsilon} )$ 
many features
(the proof is contained in Appendix~\ref{app:proof}).
}

\paragraph{Two simple examples}

With Gaussian weights and regular $d=1$, 
we find that (see Appendix~\ref{app:kernels})
\begin{align}
 k_1^\mathrm{reg} (\vec x, \vec x') &= 1 - \frac{1}{l} \| \sign(\vec x) - \sign(\vec x') \|_0 
  & &\mbox{if $h = $ step function, and} 
      \label{eq:sparse_step} \\
 k_1^\mathrm{reg} (\vec x, \vec x') &= 1 - \frac{c}{l}\| \vec x - \vec x' \|_1 & &\mbox{if $h = $ sign function}. 
                                                                                   \label{eq:sparse_sign}
\end{align}

\section{Advantages of sparse connectivity}

% There are a few advantages to sparsely-connected
% layers: additivity, stability, and scalability.

\subsection{Additive modeling}

The regular degree kernel 
\eqref{eq:K_reg}
is a sum of kernels that only depend on combinations of $d$ inputs,
making it an {\it additive kernel} of order $d$.
The general expression for the degree distribution kernel
\eqref{eq:K_deg_dist}
illustrates that sparsity leads to a mixture of additive kernels 
of different orders.
These have been referred to as additive GPs
\citep{duvenaud2011}, but these kind of 
models have a long history as
generalized additive models
\citep[e.g.][]{wahba1990,hastie2009}.
For the regular degree model with $d=1$,
the sum in \eqref{eq:K_reg} is over neighborhoods of size one,
simply the individual indices of the input space.
Thus, for any two input neighborhoods $\Nei_1$ and $\Nei_2$, we have
$|\Nei_1 \cap \Nei_2| = \emptyset$,
and
the RKHS corresponding to 
$k_{1}^\mathrm{reg} (\vec{x},\vec{x}')$
is the direct sum of the subspaces
$\mathcal H = \mathcal H_1 \oplus \ldots \oplus \mathcal H_l$.
%\citep[Chapter 10]{wahba1990}.
Thus regular $d=1$ defines a first-order additive model,
where $f(\vec x) = f_1(x_1) + \ldots + f_l (x_l)$.
When $d>1$ we allow
interactions between subsets of $d$ variables,
e.g.\ regular $d=2$ leads to
$f(\vec x) = f_{12}(x_1, x_2) + \ldots 
+ f_{l-1, l}(x_{l-1}, x_l)$,
all pairwise terms.
These interactions are defined by the structure of the terms
$k_d(\vec{x}_\Nei, \vec{x}'_\Nei)$.
Finally, the degree distribution $D(d)$ determines how much
weight to place on different degrees of interaction.

\paragraph{Generalization from fewer examples in high dimensions}

Stone proved that first-order additive models
do not suffer from the curse of dimensionality
\citep{stone1985,stone1986}, as 
the excess risk does not depend on the dimension $l$.
\citet{kandasamy2016} extended this result 
to $d$th-order additive models and found
a bound on the excess risk of 
$O (l^{2d} n^{\frac{-2s}{2s + d}} )$
or 
$O (l^{2d} C^d / n )$ for kernels
with polynomial or exponential eigenvalue decay rates
($n$ is the number of samples and 
the constants $s$ and $C$ parametrize rates).
Without additivity, these weaken to
$O (n^{\frac{-2s}{2s + l}} )$
and
$O (C^l / n )$,
much worse when $l \gg d$.

\paragraph{Similarity to dropout}

Dropout regularization \citep{hinton2012,srivastava2013}
in deep networks
has been analyzed in a kernel/GP framework
\citep{duvenaud2014},
leading to \eqref{eq:K_deg_dist}
with $D = \mathrm{Bin}(l,p)$
for a particular base kernel.
%\citep[see][eqn.~32]{duvenaud2014}.
Dropout may thus improve generalization by enforcing approximate additivity,
for the reasons above.
% By allowing for arbitrary degree distributions,
% we can tune the relative weights of different degree terms in \eqref{eq:K_deg_dist}.
% This sets a prior belief on which degrees of interaction
% are most relevant to a given problem. 

\subsection{Stability: robustness to noise or attacks affecting a few inputs}

Equations \eqref{eq:sparse_step} and \eqref{eq:sparse_sign}
are similar:
They differ only by the presence of an $\ell^0$-``norm''
versus an $\ell^1$-norm and the presence of the sign function.
Both norms are 
stable to outlying coordinates in an input $\vec x$.
This property also holds for $1 < d \ll l$,
since every feature $\phi_i(\vec x)$ only depends on $d$
inputs, and therefore only a minority of the $m$ 
features will be affected by the few outliers.\footnote{
If one coordinate of $\vec x$ is noisy,
the probability that the $i$th neuron is affected is
$d_i/l \ll 1$.}
Thus, sufficiently sparse features will be less
affected by sparse noise than a fully-connected network.
Furthermore, any regressor
$f(\vec x) = \bm\alpha^\intercal \bm\phi(\vec x)$
built from these features will also be stable.
By Cauchy-Schwartz,
% for any RKHS $\mathcal{H}$ we have the stability property
% \begin{equation}
%   | f(\vec x) - f(\vec x') | 
%   \leq 
%   \| f \|_\mathcal{H} 
%   \| \psi(\vec x) - \psi(\vec x') \|_\mathcal{H},
%   \label{eq:stability_inf}
% \end{equation}
% for any predictor $f \in \mathcal{H}$, 
% since $f(\vec x) = \langle f, \psi(\vec x) \rangle_\mathcal{H}$
% for the potentially infinite-dimensional featurization $\psi(\vec x) \in \mathcal{H}$
% \citep{bietti2019}.
% Since
% $
% \| \psi(\vec x) - \psi(\vec x') \|_\mathcal{H}
% =
% \sqrt{k(\vec x, \vec x) + k(\vec x', \vec x') - 2 k(\vec x, \vec x') },
% $
% the distance in featurizations may be expressed simply in terms of how $k$
% transforms $\vec x$ and $\vec x'$.
% In the finite-dimensional random feature approximation,
% $f(\vec x) = \bm\alpha^\intercal \bm\phi(\vec x)$
% with $\bm\alpha \in \R^m$, 
% and a similar inequality holds:
$ | f(\vec x) - f(\vec x') | 
  \leq 
  \| \bm\alpha \|_2
  \| \bm\phi(\vec x) - \bm\phi(\vec x') \|_2 
$.
% \label{eq:stability_finite}
% where each term in \eqref{eq:stability_finite} approximates
% the corresponding term in \eqref{eq:stability_inf}.
Thus if $\vec x' = \vec x + \vec e$ where $\vec e$ 
is noise with small number of nonzero entries, then
$f(\vec x') \approx f(\vec x)$
since 
$\bm\phi(\vec x) \approx \bm\phi(\vec x')$.
The network intrinsically denoises its inputs,
which may offer advantages \citep[e.g.][]{litwin-kumar2017}.
Stability also may guarantee
the robustness of networks to adversarial attacks
\citep{lecuyer2018,cohen2019,salman2019},
thus sparse networks are robust to attacks on only a few inputs.

% Another possibility is to deal with input noise, 
% In almost all of the ``classical'' theory of kernel learning, input noise is not addressed.
% However, recent work in the thesis of Pei-Yuan Wu
% derives optimal kernel basis for various types of input noise \citep{wu2015}.
% Let the input $x = \tilde{x} + \epsilon_1$, where 
% $x$ is the true input but only $\tilde{x}$ is observed, and $\epsilon_1$ is the input noise.
% In the case that the data follow the model
% \[
% y = h(x) + \epsilon_2,
% \]
% with the zero mean output noise $\epsilon_2$,
% then the optimal regression function with input noise will be
% $\E[ y | \tilde{x} ] = \E [ h(x) | \tilde{x} ] = \tilde{h}(\tilde{x})$.
% In particular, for Gaussian inputs corrupted with Gaussian noise, 
% the Hermite polynomials 
% (which are orthogonal with respect to the Gaussian measure) 
% form the optimal nonlinear basis from which to learn $ \tilde{h}(\tilde{x})$.

\subsection{Scalability: computational and biological}

\label{sec:scaling}

\paragraph{Computational}
Sparse random features give potentially huge improvements in scaling.
Direct implementations of additive models incur a large cost for $d > 1$,
% \begin{equation}
%   \label{eq:tensor_prod_kern}
%   k_d(\vec{x}_\Nei, \vec{x}'_\Nei)= \prod_{i=1}^d k_i((\vec x_\Nei)_i, (\vec x'_\Nei)_i),
% \end{equation}
since \eqref{eq:K_reg} requires a sum over
$\binom{l}{d} = O(l^d)$ neighborhoods.\footnote{
There is a more efficient method when working with a {\em tensor product kernel},
%$k_d (\vec z, \vec z') = \prod_{i=1}^d k_1(z_i, z_i')$
as in
\citep{bach2009,duvenaud2011,kandasamy2016}.}
This leads to $O(n^2 l^d)$ time to compute the Gram matrix of $n$ examples
and $O(n l^d)$ operations to evaluate $f(\vec x)$.
In our case,
since the random features method is primal,
we need to perform
$O(n m d)$ computations to evaluate the feature matrix
and the cost of evaluating $f(\vec x)$
remains $O(m d)$.\footnote{
Note that we need to take $m = \Omega(l)$ to ensure
good approximation of the kernel (Appendix~\ref{app:proof}).}
Sparse matrix-vector multiplication makes evaluation faster than  the
$O(ml)$ time it takes when connectivity is dense.
For ridge regression, we have the usual advantages that 
computing an estimator takes
$O(n m^2 + n m d)$ time and $O(n m + m d)$ memory,
rather than $O(n^3)$ time and $O(n^2)$ memory for 
a na\"ive kernel ridge method.

\paragraph{Biological}
In a small animal such as a flying insect, space is extremely limited.
Sparsity offers a huge advantage in terms of wiring cost \citep{litwin-kumar2017}.
Additive approximation also means
that such animals can learn much more quickly,
as seen in the mushroom body \citep{huerta2009,delahunt2019,delahunt2018a}.
While the previous computational points do not apply as well to biology,
since real neurons operate in parallel,
fewer operations translate into lower metabolic cost for the animal.

\section{Discussion}

Inspired by their ubiquity in biology,
we have studied sparse random networks of neurons using
the theory of random features,
finding the advantages of 
additivity, stability, and scalability.
This theory shows that sparse networks such as those found in the mushroom body,
cerebellum, and hippocampus can be powerful function approximators.
Kernel theories of neural circuits may be more broadly applicable 
in the field of computational neuroscience.

\paragraph{Expanding the theory of dimensionality in neuroscience}
Learning is easier in
additive function spaces because they are {\em low-dimensional},
a possible explanation for few-shot learning in biological systems.
Our theory is complementary to existing theories of dimensionality
in neural systems 
\citep[][]{ganguli2012,rigotti2013,babadi2014,meister2015,mazzucato2016,litwin-kumar2017,gao2017,mastrogiuseppe2018,farrell2019},
which defined dimensionality using a (debatably) ad hoc 
skewness measure of covariance eigenvalues.
Kernel theory extends this concept,
measuring dimensionality similarly \citep{zhang2005} 
in the space of nonlinear functions spanned by the kernel.

\paragraph{Limitations}

We model biological neurons as simple scalar functions, 
completely ignoring time and neuromodulatory context.
It seems possible
that a kernel theory could be developed for time- and context-dependent features.
Our networks suppose i.i.d.\ weights,
but weights that follow Dale's law should also be considered.
We have not studied the sparsity of activity, postulated to be relevant in cerebellum.
It remains to be demonstrated how the theory can make concrete, testable predictions,
e.g.\ whether this theory may explain identity versus concentration encoding of odors
or the discrimination/generalization tradeoff under experimental conditions.

% {
% \small
% \paragraph{Supplement}
% Numerical experiments, kernel examples,
% and approximation results at
% \todo{put arxiv link here}
% }

{\small
\paragraph{Acknowledgments}
KDH was supported by a Washington Research Foundation postdoctoral fellowship.
Thank you to Rajesh Rao for support during this project and to
Bing Brunton for support and many helpful comments.
}

%\newpage 
%\section*{References}
{
\setlength{\bibsep}{0pt plus 0.3ex}
\small
\bibliographystyle{unsrtnat}
\bibliography{Library}
}

\endgroup
\newpage
\appendix
{\Large \bf Appendices: Additive function approximation in the brain}

{\bf Table of contents}
\begin{itemize}
\item Appendix~\ref{app:experiments}: Test problems and numerical experiments
%\item Appendix~\ref{app:learning}: Learning theory review
\item Appendix~\ref{app:kernels}: Kernel examples arising from random features, dense and sparse
\item Appendix~\ref{app:proof}: Kernel approximation results, uniform convergence of Lipschitz features
\end{itemize}

\section{Test problems}
\label{app:experiments}

We have implemented sparse random features in Python to demonstrate
the properties of learning in this basis.
Our code, 
which provides a {\tt scikit-learn} style
{\tt SparseRFRegressor} and
{\tt SparseRFClassifier} 
estimators,
is available from
\url{https://github.com/kharris/sparse-random-features}.

\subsection{Additive function approximation}

\subsubsection{Comparison with datasets from \citet{kandasamy2016}}
As said in the main text,
\citet{kandasamy2016}
created a theory of the generalization properties of
higher-order additive models.
They supplemented this with an empirical study of a number of datasets
using their Shrunk Additive Least Squares Approximation (SALSA)
implementation of the additive kernel ridge regression (KRR).
Their data and code were obtained from
\url{https://github.com/kirthevasank/salsa}.

We compared the performance of SALSA 
to the sparse random feature approximation
of the same kernel.
We employ random sparse Fourier features
with Gaussian weights $N(0, \sigma^2 \mat I)$
with $\sigma = 0.05 \cdot \sqrt{d} n^{1/5}$
in order to match the Gaussian radial basis function 
used by \citet{kandasamy2016}.
We use $m = 300 l$ features for every problem,
with regular degree $d$ selected equal to the one chosen by SALSA.
The regressor on the features is cross-validated ridge regression 
({\tt RidgeCV} from {\tt scikit-learn})
with ridge penalty selected from 5 logarithmically spaced points
between $10^{-4} \cdot n$ and $10^2 \cdot n$.

\begin{figure}[b!]
  \centering
  \includegraphics[width=\linewidth,trim={20 20 20 20},clip]
  {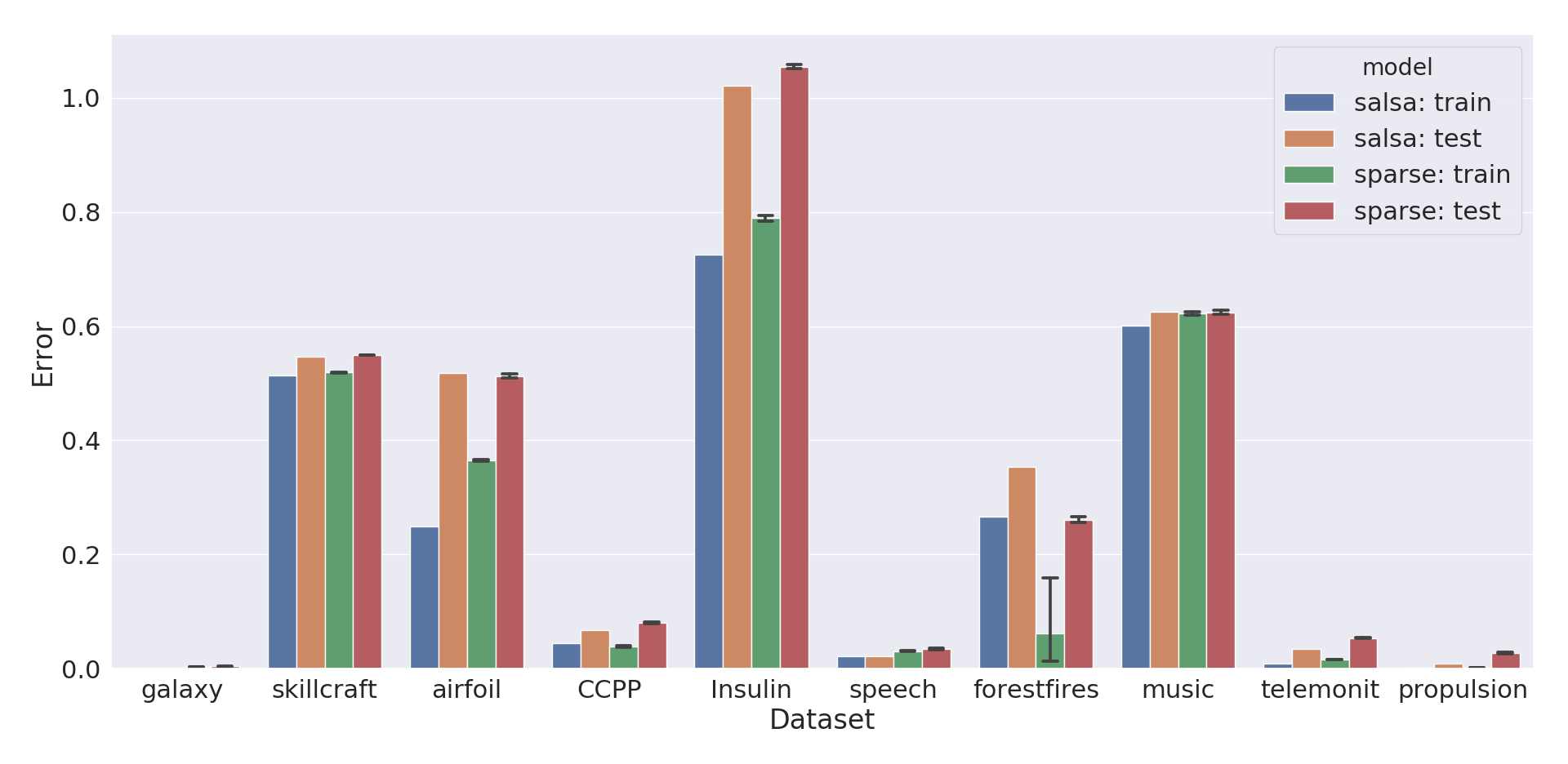}
  \caption{Comparison of sparse random feature approximation to additive kernel method SALSA
  \citep{kandasamy2016}. The parameters were matched between the two models (see text).
  The sparse feature approximation performs slightly worse than the exact method,
  but similar.}
  \label{fig:salsa}
\end{figure}

In Figure~\ref{fig:salsa}, we compare the performance of sparse
random features to SALSA.
Generally, the training and testing errors of the sparse model
are slightly higher than for the kernel method,
except for the {\tt forestfires} dataset.

\subsubsection{Polynomial test function shows generalization from fewer examples}

We studied the speed of learning for a test function as well.
The function to be learned $f(\vec x)$
was a sparse polynomial plus a linear term:
\[
f(\vec x) = 
c_1 \vec a^\intercal \vec x + 
c_2 \, p(\vec x) .
\]
The linear term took 
$\vec a \sim N(0, \mat I)$,
the polynomial $p$
was chosen to have 3 terms of degree 3
with weights drawn from $N(0,1)$.
The inputs $\vec x$ are drawn from 
the uniform distribution over 
$[0,1]^{16}$.
Gaussian noise $\epsilon$ with variance $0.05^2$ was added to generate
observations $y_i = f(\vec x_i) + \epsilon_i$.
Constants $c_1$ and $c_2$ were tuned
by setting 
$c_1 = \frac{1}{\sigma_\mathrm{lin}} \frac{1-\alpha}{\sqrt{\alpha^2 + (1-\alpha)^2}} $
and 
$c_2 = \frac{1}{ \sigma_\mathrm{nonlin}} \frac{\alpha}{\sqrt{\alpha^2 + (1-\alpha)^2}}$,
where
$\alpha = 0.05$
and
$\sigma_\mathrm{lin}$ and $\sigma_\mathrm{nonlin}$ were the 
standard deviations of the linear and nonlinear terms alone.

For this problem we use random features of varying 
regular degrees $d = 1,3,10,16$ and number of data points $n$.
The features use a
Fourier nonlinearity $h(\cdot) = (\sin \cdot, \cos \cdot)$,
weights $w_{ij} \sim N(0, d^{-1/2})$,
and biases $b_i \sim U([-\pi, \pi])$,
leading to an RBF kernel in $d$ dimensions.
The output regression model is again ridge regression
with the penalty selected via cross-validation on the training set
from 7 logarithmically spaced points between
$10^{-4}$ and $10^2$.

\begin{figure}[t!]
  \centering
  \includegraphics[width=0.49\linewidth,trim={10 0 10 0},clip]{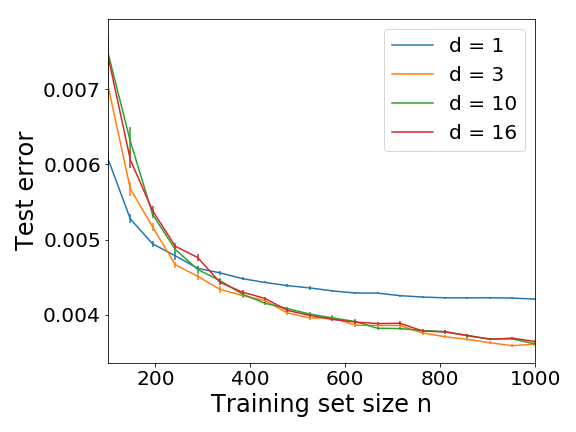}
  \hfill
  \includegraphics[width=0.49\linewidth,trim={10 0 10 0},clip]{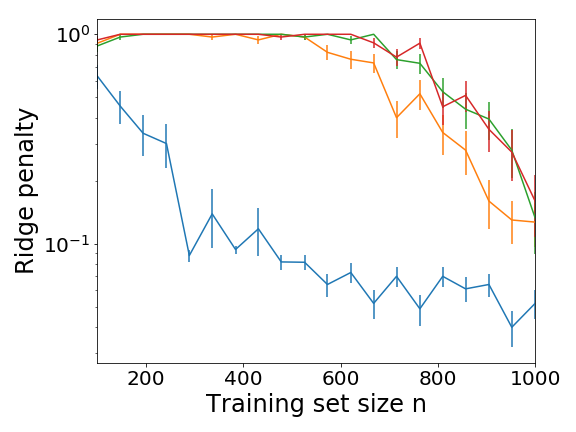}
  \caption{Performance of sparse random features of differing degree $d$
    and training size $n$ for the polynomial test function. 
    Test error is measured as mean square error with noise floor at 0.0025.
    As the amount of training data increases, higher $d$,
    i.e.\ more model complexity, is preferred.
  }
  \label{fig:test_fun}
\end{figure}

In Figure~\ref{fig:test_fun}, we show the test error as well as the selected
ridge penalty for different values of $d$ and $n$.
With a small amount of data ($n < 250$), the model with $d=1$ has the lowest
test error, since this ``simplest'' model is less likely to overfit.
On the other hand, in the intermediate data regime ($250 < n < 400$),
the model with $d = 3$ does best.
For large amounts of data ($n > 400$), 
all of the models with interactions
$d \geq 3$ do roughly the same. 
Note that with the RBF kernel the RKHS 
$\mathcal{H}_d \subseteq \mathcal{H}_{d'}$ 
whenever
$d \leq d'$, so $d > 3$
can still capture the degree 3 polynomial model.
However, we see that the more complex models have a higher ridge penalty selected.
The penalty is able to adaptively control this complexity given enough data.

\subsection{Stability with respect to sparse input noise}

Here we show that sparse random features are stable
for spike-and-slab input noise.
In this example, the truth follows a linear model,
where we have
random input points 
$\vec{x}_i \sim \mathcal{N}(0,\mat{I})$
and linear observations 
$y_i = \vec{x}_i^\intercal \beta$
for 
$i = 1, \ldots, n$
and
%where $\epsilon_y \sim \mathcal{N}(0,\sigma_y)$, $\sigma_y = 0$ or 0.
$\beta \sim \mathcal{N}(0, \mat{I})$.
However, we only have 
access to sparsely corruputed inputs 
$\vec w_i = \vec x_i + \vec e_i$, 
where 
$\vec e_i = 0$ 
with probability $1-p$ 
and $\vec e_i = \bm \epsilon_x - \vec x_i$ 
with probability $p$, 
$\bm\epsilon_x \sim \mathcal{N}(0, \sigma^2 \mat I)$.
That is, the corrupted inputs are replaced with pure noise.
We use $p = 0.03 \ll 1$ and $\sigma = 6 \gg 1$
so that the noise is sparse but large when it occurs.

\begin{table}[h!]
\centering
\begin{tabular}{l | llll}
Model              & Training score & Testing score       \\
\hline
\hline
% Linear (uncorrupted) & 1.000          & 1.000           \\
Linear   & 0.854          & 0.453           \\
Kernel         & 1.000          & 0.607            \\
Trim + linear      & 0.945          & 0.686           \\
Huber              & 0.858          & 0.392          
\end{tabular}
\caption{Scores ($R^2$ coefficient) 
  of various regression models on linear data with corrupted 
  inputs. In the presence of these errors, linear regression fails to acheive as good a test 
  score as the kernel method, which is almost as good as trimming before performing regression
  and better than the robust Huber estimator.
}
\label{tab:stability}
\end{table}

In Table~\ref{tab:stability} we show the performance of various methods
on this regression problem given the corrupted data $(\mat W, \vec y)$.
Note that if the practitioner has access to the uncorrupted data $\mat X$,
linear regression succeeds with a perfect score of 1.
Using kernel ridge regression
with $k(\vec x, \vec x') = 1 - \frac{1}{l}\| \vec x - \vec x' \|_1 $,
the kernel that arises from sparse random features with $d=1$ and sign nonlinearity,
leads to improved performance over na\"ive linear regression on the corrupted data
or a robust Huber loss function.
The best performance is attained by trimming the outliers and then performing linear regression.
However, this is meant to illustrate our point that sparse random features and their corresponding
kernels may be useful when dealing with noisy inputs in a learning problem.

\begin{figure}[t!]
  \centering
  \includegraphics[width=0.49\linewidth,trim={8 0 10 0},clip]{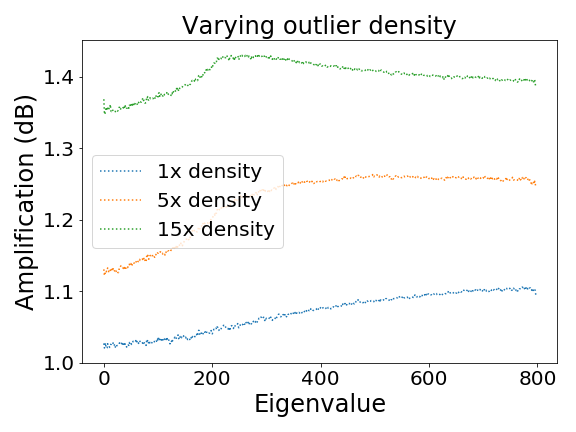}
  \hfill
  \includegraphics[width=0.49\linewidth,trim={8 0 10 0},clip]{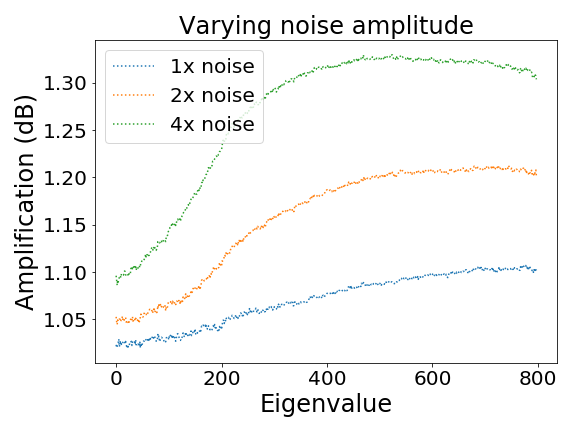}
  \caption{
    Kernel eigenvalue amplification while
    {\bf (left)} varying $p$ with $\sigma = 6$ fixed, and
    {\bf (right)} varying $\sigma$ with $p = 0.03$ fixed.
    Plotted is the ratio of eigenvalues of the kernel matrix
    corrupted by noise to those without any corruption,
    ordered from largest to smallest in magnitude.
    We see that the sparse feature kernel shows little noise amplification
    when it is sparse (right), even for large amplitude.
    On the other hand, less sparse noise does get amplified (left).
  }
  \label{fig:stability}
\end{figure}

In Figure~\ref{fig:stability} we show another way of measuring this stability property.
We compute the eigenvalues of the kernel matrix
on a fixed dataset of size $n = 800$ points
both with noise and without noise.
Plotted are the ratio of the noisy to noiseless eigenvalues, in decibels,
which we call the amplification and is a measure of how corrupted the
kernel matrix is by this noise.
The main trend that we see is, for fixed $p=3$, 
changing the amplitude of the noise $\sigma$ does
not lead to significant amplification, especially of the 
early eigenvalues which are of largest magnitude.
On the other hand, making the outliers denser does lead to more amplification
of all the eigenvalues.
The eigenspace spanned by the largest eigenvalues is the most ``important'' for
any learning problem.

\section{Kernel examples}
\label{app:kernels}

\subsection{Fully-connected weights}

We will now describe a number of common random features and the 
kernels they generate with fully-connected weights.
Later on, we will see how these change as sparsity is introduced 
in the input-hidden connections.

\paragraph{Translation invariant kernels}

The classical random features
\citep{rahimi2008}
sample Gaussian weights
$\vec{w} \sim N(0, \sigma^{-2} \mat{I})$, 
uniform biases
$b \sim U[-a,a]$,
and employ the 
Fourier nonlinearity
$h(\cdot) = \cos(\cdot)$.
% Note that the Fourier function 
% can be implemented 
% two nonlinearities $\cos(\cdot)$ and $\sin(\cdot)$
% in a real-valued network.
This leads to the Gaussian radial basis function kernel
\[
k(\vec{x},\vec{x}') = \exp \left( -\frac{1}{2\sigma^2} \| \vec{x} - \vec{x}' \|^2 \right),
\]
for $\vec{x}, \vec{x}' \in [-a,a]^l$. 
In fact, every translation-invariant kernel arises 
from Fourier nonlinearities for some distributions of weights and biases
(B\^{o}chner's theorem).

\paragraph{Moment generating function kernels}

The exponential function is 
more similar to the kinds of monotone firing rate curves found in 
biological neurons.
In this case, we have
\[
k(\vec{x}, \vec{x}') = \E \exp(\vec{w}^\intercal (\vec{x} + \vec{x}') + 2b) .
\]
We can often evaluate this expectation using moment generating functions.
For example, if $\vec{w}$ and $b$ are independent,
which is a common assumption, then 
\[
k(\vec{x}, \vec{x}') = \E \left( \exp(\vec{w}^\intercal (\vec{x} + \vec{x}') \right) 
\cdot \E \exp(2 b),
\]
where
$\E \left( \exp(\vec{w}^\intercal (\vec{x} + \vec{x}') \right)$
is the moment generating function for the marginal distribution of $\vec{w}$,
and $\E \exp(2 b)$ is just a constant that scales the kernel.

For multivariate Gaussian weights 
$\vec{w} \sim N(\vec{m}, \mat{\Sigma})$ 
this becomes
\begin{equation*}
k(\vec{x}, \vec{x}') = 
\exp \left( \vec{m}^\intercal (\vec{x}+\vec{x}') + 
  \frac{1}{2} (\vec{x} + \vec{x}')^\intercal \mat{\Sigma} (\vec{x} + \vec{x}')  \right)
\cdot \E \exp(2b) .
\end{equation*}
This equation becomes more interpretable if $\vec{m} = 0$ and 
$\mat{\Sigma} = \sigma^{-2} \mat{I}$ 
and the input data are normalized: 
$\| \vec{x} \| = \| \vec{x}' \| = 1$.
Then, 
\[
k(\vec{x}, \vec{x}')
\propto \exp \left( \sigma^{-2} \vec{x}^\intercal \vec{x}' \right)
\propto \exp \left( -\frac{1}{2 \sigma^2} \| \vec{x} - \vec{x}' \|^2 \right).
\]
This result highlights that
dot product kernels 
$k(\vec{x}, \vec{x}')= v(\vec{x}^\intercal \vec{x}')$ ,
where $v: \R \to \R$,
are radial basis functions on the sphere 
$S^{l-1} = \{\vec{x} \in \R^l : \| \vec{x} \|_2 = 1\}$.
The eigenbasis of these kernels are the spherical harmonics \citep{smola2001,bach2017}.

\paragraph{Arc-cosine kernels}

This class of kernels is also induced by monotone ``neuronal'' nonlinearities and leads to 
different radial basis functions on the sphere  \citep{smola2001,cho2009,cho2011}.
Consider standard normal weights 
$\vec{w} \sim N(0, \mat{I})$
and nonlinearities which are threshold polynomial functions
\[
 h(z) = \Theta(z) z^p
\]
for $p \in \mathbb{Z}^+$,
where $\Theta(\cdot)$ is the Heaviside step function.
The kernel in this case is given by
\begin{align*}
k(\vec{x}, \vec{x}') &= 2 \int_{\R^l} 
\Theta(\vec{w}^\intercal \vec{x}) 
\Theta(\vec{w}^\intercal \vec{x}') 
(\vec{w}^\intercal \vec{x})^p 
(\vec{w}^\intercal \vec{x}')^p 
\;
\frac{e^{\frac{-\| \vec{w} \|^2}{2}}}{(2\pi)^{l/2}}
\, \dee \vec{w}
\\
&= \frac{1}{\pi} \| \vec{x} \|^p \| \vec{x}' \|^p J_p(\theta) ,
\end{align*}
for a known function $J_p(\theta)$ where 
$\theta = \mathrm{arccos} \left( \frac{\vec{x}^\intercal \vec{x}' }{\| \vec{x} \| \| \vec{x}' \|} \right)$.
Note that arc-cosine kernels are also dot product kernels.
Also, if the weights are drawn as $\vec w \sim N(0, \sigma^{-2} \vec I)$,
the terms $\vec x$ are replaced by $\vec x / \sigma$, but this does not affect $\theta$.
With $p=0$, corresponding to the step function nonlinearity,
we have $J_0 (\theta) = \pi - \theta$, and the resulting kernel does not depend on 
$\| \vec x \|$ or $\| \vec x' \|$:
\begin{align}
  k(\vec{x}, \vec{x})
  &= 1 - \frac{1}{\pi} \mathrm{arccos} 
    \left( \frac{\vec{x}^\intercal \vec{x}'}{\|\vec{x}\| \|\vec{x}'\|} \right).
   \label{eq:K_step_dense}
\end{align}

\paragraph{Sign nonlinearity}

We also consider a shifted version of the step function nonlinearity,
the sign function $\sign(z)$, equal to $+1$ when $z > 0$,
$-1$ when $z < 0$,
and zero when $z = 0$.
Let $b \sim U([a_1,a_2])$ and $\vec{w} \sim P$,
where $P$ is any spherically symmetric distribution, such as a Gaussian.
Then,
\begin{align}
  k(\vec{x}, \vec{x}') 
  &=  \E \left[ \int_{a_1}^{a_2} \frac{\dee b}{a_2-a_1} \, 
    \sign(\vec{w}^\intercal \vec{x}-b)\, \sign(\vec{w}^\intercal \vec{x}'-b) \right] \nonumber \\
  &= \E \left[ 1 - 2 \, \frac{|\vec{w}^\intercal \vec{x} - \vec{w}^\intercal \vec{x}'|}{a_2 - a_1} \right] 
    \nonumber \\
  &= 1 -  \frac{2}{a_2 - a_1} \E |\vec{w}^\intercal (\vec{x} - \vec{x}')| \nonumber \\
  &= 1 - 2 \E (| \vec{w}^\intercal \vec{e}|) \frac{ \| \vec{x}-\vec{x}'\|_2}{a_2-a_1} \nonumber 
\end{align}
where $\vec{e} = (\vec{x}-\vec{x}')/\|\vec{x}-\vec{x}'\|_2$.
The factor $\E (|\vec{w}^\intercal \vec{e}|)$ 
in front of the norm is just a function of the radial part of the distribution $P$,
which we should set inversely proportional to $\sqrt{l}$
to match the scaling of $\|\vec{x} - \vec{x}'\|_2$.
For $\vec w \sim N(0,\sigma^2 l^{-1} \mathbf{I})$, we obtain
\begin{equation}
  \label{eq:K_sign_dense}
  k(\vec{x}, \vec{x}')
  = 1 -  2 \sigma \sqrt{\frac{2}{\pi l}}  \, \frac{\| \vec{x}-\vec{x}' \|_2}{a_2-a_1} .
\end{equation}

\subsection{Sparse weights}

The sparsest networks possible have $d=1$, 
leading to first-order additive kernels.
Here we look at two simple nonlinearities where we
can perform the sum and obtain an explicit formula 
for the additive kernel.
In both cases, the kernels are simply related to
a robust distance metric.
This suggests that such kernels may be useful in cases 
where there are outlier coordinates in the input data.

\paragraph{Step function nonlinearity}

We again consider the step function nonlinearity $h(\cdot) = \Theta(\cdot)$,
which in the case of fully-connected Gaussian weights leads to the 
degree $p=0$ arc-cosine kernel
$k(\vec{x}, \vec{x}') = 1 - \frac{\theta(\vec x, \vec x')}{\pi}$.
When $d=1$,
$\vec{x}_\Nei = x_i$ and $\vec{x}_\Nei' = x'_i$ are scalars.
For a scalar $a$, normalization leads to $a / \|a\| = \sign(a)$.
Therefore, $\theta = \mathrm{arccos} \left( \sign(x_i) \, \sign(x'_i) \right) = 0$
if $\sign(x_i) = \sign(x'_i)$ and $\pi$ otherwise.
Performing the sum in \eqref{eq:K_reg}, we find that the kernel becomes 
\begin{equation}
k_{1}^\mathrm{reg} ( \vec{x}, \vec{x}') 
= 1 - \frac{\left| \left\{i:  \sign(x_i) \not= \sign(x'_i) \right\} \right| }{l} 
= 1 - \frac{\| \sign(\vec{x}) - \sign(\vec{x}') \|_0}{l} \, .
\label{eq:arccos_1}
\end{equation}
This kernel is equal to one minus the normalized
Hamming distance of vectors $\sign(\vec{x})$ and $\sign(\vec{x}')$.
The fully-connected kernel, on the other hand, 
uses the full angle between the vectors $x$ and $x'$.
The sparsity can be seen as inducing a ``quantization,'' 
via the sign function, on these vectors.
Finally, if the data are in the binary hypercube, 
with $\vec{x}$ and $\vec{x}' \in \{-1, +1\}^l$,
then the kernel is exactly one minus the normalized Hamming distance.

% Step function nonlinearities are in fact tractable with a bias 
% and general weight distributions. 
% Let $b \sim U([a_1,a_2])$ and $w \sim P$.
% Then,
% \begin{align*}
%   K_1(x,x') &= \frac{1}{n} \sum_{i=1}^n 
%              \E_P \left[ \int_{a_1}^{a_2} \frac{\dee b}{a_2-a_1} \, 
%              \Theta(wx_i-b) \Theta(wx'_i-b) \right]\\
%   &= \frac{1}{n} \sum_{i=1}^n \E_P \left[ \frac{\min(w x_i, w x'_i) - a_1}{a_2 - a_1} \right]_+
% \end{align*}
% \todo{gets messy---apply some symmetry arguments to get nicer form}
% If $x, y > 0$, we get
% \[
% \E_P \min(wx,wy) =  -\frac{\E_P ( |w| )}{2} |y-x|
% \]

\paragraph{Sign nonlinearity}

We now consider a slightly different nonlinearity, the sign function.
It will turn out that the kernel is quite different than for the step function.
This has $h(\cdot) = \sign(\cdot) = 2\Theta(\cdot) - 1$.
Let $b \sim U([a_1,a_2])$ and $w \sim P$.
Then,
\begin{align}
  k_1^\mathrm{reg} (\vec{x}, \vec{x}') 
  &= \frac{1}{l} \sum_{i=1}^l 
             \E_P \left[ \int_{a_1}^{a_2} \frac{\dee b}{a_2-a_1} \, 
             \sign(w x_i - b) \sign(w x'_i - b) \right] \nonumber \\
  &= \frac{1}{l} \sum_{i=1}^l 
    \E_P \left[ 1 - 2 \, \frac{|wx_i - w x'_i|}{a_2-a_1} \right] \nonumber \\
  &= 1 -  \frac{2 \E_P ( |w| )}{l} \, \frac{\| x - x' \|_1}{a_2 - a_1} . \label{eq:stump}
\end{align}
Choosing $P(w) = \frac{1}{2} \delta(w+1) + \frac{1}{2} \delta(w-1)$
and $a_2 = -a_1 = a$
recovers the  ``random stump'' result of \citet{rahimi2008}.
Despite the fact that sign is just a shifted version of the step function,
the kernels are quite different:
the sign nonlinearity does not exhibit the quantization effect
and depends on the $\ell^1$-norm rather than the $\ell^0$-``norm''.

\section{Kernel approximation results}
\label{app:proof}

We now show a basic uniform convergence result for any random features,
not necessarily sparse,
that use Lipschitz continuous nonlinearities. 
Recall the definition of a Lipschitz function:
\begin{definition}
  A function $f: \mathcal{X} \to \R$ is said to be {\bf $L$-Lipschitz continuous}
  (or Lipschitz with constant $L$)
  if 
  \[
  |f(\vec x) - f(\vec y)| \leq L \|\vec x - \vec y\|
  \]
  holds for all $\vec x, \vec y \in \mathcal{X}$.
  Here, $\|\cdot\|$ is a norm on $\mathcal{X}$
  (the $\ell^2$-norm unless otherwise specified).
\end{definition}

Assuming that $h$ is Lipschitz and 
some regularity assumptions on the distribution $\mu$,
the random feature expansion approximates the kernel uniformly over $\mathcal{X}$.
As far as we are aware, this result has not been stated previously,
although it appears to be known
(see \citet{bach2017a})
and is very similar to Claim 1 in \citet{rahimi2008}
which holds only for random Fourier features
(see also \citet{sutherland2015} and \citet{sriperumbudur2015} for improved results
in this case).
The rates we obtain for Lipschitz nonlinearities are not essentially 
different than those obtained in the Fourier features case.

As for the examples we have given, the only ones which are not Lipschitz
are the step function (order 0 arc-cosine kernel) and sign nonlinearities.
Since these functions are discontinuous, 
their convergence to the kernel occurs in a weaker than uniform sense.
However, our result does apply to the rectified linear nonlinearity
(order 1 arc-cosine kernel), which is non-differentiable at zero 
but 1-Lipschitz and widely applied in artificial neural networks.
The proof of the following Theorem appears at the end of this section.

\begin{theorem}[Kernel approximation for Lipschitz nonlinearities]
  Assume that $\vec{x} \in \mathcal{X} \subset \R^l$
  and that $\mathcal{X}$ is compact, $\Delta = \mathrm{diam}(\mathcal{X})$, and 
  the null vector $0 \in \mathcal{X}$.
  Let the weights and biases
  $(\vec w, b)$ 
  follow the distribution $\mu$
  on $\R^{l +1}$
  with finite second moments.
%  which is sub-Gaussian with variance factor $Vl^{-1}$.
  Let
  $h(\cdot)$ be a
  nonlinearity which is $L$-Lipschitz continuous
  and define the random feature
  $\phi: \R^l \to \R$
  by 
  $\phi(\vec x) = h( \vec w^\intercal \vec x - b)$.
  We assume that
  the following hold for all $\vec x \in \mathcal{X}$:
  $|\phi(\vec x)| \leq \kappa$ almost surely,
  $\E \,|\phi(\vec x)|^2 < \infty$, and
  $\E \;\phi(\vec x) \phi(\vec x') = k(\vec x, \vec x')$.

  Then
  $
  \sup_{\vec x, \vec x' \in \mathcal{X}}
  \left| 
    \frac{1}{m} \bm{\phi}(\vec x)^\intercal \bm{\phi} (\vec x') 
    -
    k (\vec x, \vec x')
  \right| 
  \leq 
  \epsilon 
  $
  with probability
  at least
  % \[  
  % 1 
  % - 
  % 2^8 \left( \frac{c_1}{\epsilon} \right)^2 \exp \left( \frac{-m \epsilon^2}{4 (2 l + 2) \kappa^2} \right)
  % -
  % 2 \exp\left(- \frac{c_2^2}{2} \right)
  % ,
  % \]
  % where 
  % $c_1 = \kappa L \Delta \sqrt{\E \|\vec w \|^2 + 3 (\E \|\vec w \|)^2}$
  % and 
  % $\kappa = |\phi(\vec x; 0)| + L\Delta (\|\E \vec w\| + c_2 \sqrt{2 \log(2m)}) $.
  %$\kappa \gtrsim L \Delta c_2 \sqrt{2 \log (2m)} $.
  \[
  1 
  - 
  2^8 \left( \frac{\kappa L \Delta \sqrt{\E \|\vec w \|^2 + 3 (\E \|\vec w \|)^2}}{\epsilon} \right)^2 
  \exp \left( \frac{-m \epsilon^2}{4 (2 l + 2) \kappa^2} \right).
  \]
  \label{thm:approx}
\end{theorem}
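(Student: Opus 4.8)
The plan is to run the $\epsilon$-net (covering number) argument of \citet{rahimi2008}, but carried out on the product domain $\mathcal{S} = \mathcal{X} \times \mathcal{X} \subset \R^{2l}$ rather than on the difference set $\{\vec x - \vec x'\}$: a general $L$-Lipschitz nonlinearity $h$ need not produce a translation-invariant kernel, so the error depends on $\vec x$ and $\vec x'$ separately and must be controlled on the full product. Write $s(\vec x, \vec x') = \frac{1}{m}\bm\phi(\vec x)^\intercal \bm\phi(\vec x')$ and $g = s - k$, which is mean zero at each point because $\E\,\phi(\vec x)\phi(\vec x') = k(\vec x, \vec x')$ by hypothesis. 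Since $0 \in \mathcal{X}$ and $\mathrm{diam}(\mathcal{X}) = \Delta$, every $\vec x$ satisfies $\|\vec x\| \le \Delta$, so $\mathcal{S}$ lies in a ball of radius $\sqrt{2}\,\Delta$ and admits an $r$-net $\{(\vec x_j, \vec x'_j)\}$ of cardinality $T \le (C\Delta/r)^{2l}$; the exponent $2l$ here is exactly what becomes the $2l+2$ in the final bound, mirroring how the input dimension $d$ appears as $d+2$ in \citet{rahimi2008}.

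First I would control $g$ at the anchors. For a fixed pair, $s$ is an average of $m$ i.i.d.\ terms $\phi_i(\vec x)\phi_i(\vec x')$, each lying in $[-\kappa^2, \kappa^2]$ by the assumption $|\phi| \le \kappa$, with mean $k(\vec x, \vec x')$. Hoeffding's inequality therefore gives a sub-Gaussian tail $\P[\,|g(\vec x_j,\vec x'_j)| \ge \epsilon/2\,] \le 2\exp(-c\,m\epsilon^2/\kappa^2)$, and a union bound over the $T$ anchors controls $g$ on the net.

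The step that departs from the Fourier case, and which I expect to be the crux, is bounding the expected squared Lipschitz constant of $g$ on $\mathcal{S}$. Each product feature $\phi_i(\vec x)\phi_i(\vec x') = h(\vec w_i^\intercal \vec x - b_i)\,h(\vec w_i^\intercal \vec x' - b_i)$ has, by the product rule together with $|\phi_i| \le \kappa$ and the $L\|\vec w_i\|$-Lipschitz continuity of each factor, a gradient of norm at most $C\kappa L\|\vec w_i\|$ on $\R^{2l}$; hence the empirical part has Lipschitz constant $L_s \le \frac{C\kappa L}{m}\sum_i \|\vec w_i\|$. Because $k = \E[s]$, the kernel is $\E[L_s]$-Lipschitz, so $L_g \le L_s + \E[L_s]$ and $\E[L_g^2] \le \E[L_s^2] + 3(\E[L_s])^2$. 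Using the convex-combination bound $\E[(\tfrac{1}{m}\sum_i\|\vec w_i\|)^2] = \tfrac{1}{m}\E\|\vec w\|^2 + \tfrac{m-1}{m}(\E\|\vec w\|)^2 \le \E\|\vec w\|^2$, this assembles into a factor of order $\kappa^2 L^2\big(\E\|\vec w\|^2 + 3(\E\|\vec w\|)^2\big)$, exactly matching the prefactor in the statement; the finite-second-moment assumption on $\mu$ is precisely what keeps this finite. A Markov bound then yields $\P[L_g \ge \epsilon/(2r)] \le (2r/\epsilon)^2\,\E[L_g^2]$.

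Finally I would union the two failure events: on the complement, if $|g| < \epsilon/2$ at every anchor and $L_g < \epsilon/(2r)$, then any point of $\mathcal{S}$ lies within $r$ of an anchor and $\sup_{\mathcal{S}}|g| < \epsilon/2 + r\cdot\epsilon/(2r) = \epsilon$. Summing the covering-term probability $2T\exp(-c\,m\epsilon^2/\kappa^2)$ and the Markov term, then optimizing the net radius $r$ to balance the growing covering count $(C\Delta/r)^{2l}$ against the shrinking Markov factor $r^2$, produces the stated bound, with the product-space dimension surfacing as $2l+2$ and the expected-Lipschitz factor surfacing inside the squared prefactor. The Hoeffding, Markov, and radius-optimization steps are routine once the geometry is fixed; the genuinely new work is the product-feature gradient bound and the bookkeeping of its expectation, which I would expect to be where the constants are delicate.
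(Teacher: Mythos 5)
Your proposal follows essentially the same route as the paper's own proof: an $\epsilon$-net over the product space $\mathcal{X}\times\mathcal{X}$, Hoeffding plus a union bound at the net anchors, a product-feature Lipschitz bound of order $\kappa L\|\vec w_i\|$ controlled via Markov's inequality on $\E[L_g^2]\le \E[L_s^2]+3(\E L_s)^2$, and optimization of the net radius to produce the $2l+2$ exponent. The only differences are cosmetic (the paper uses the direct sum norm $\|\vec x\|+\|\vec x'\|$ rather than the Euclidean norm on $\R^{2l}$, and a four-term triangle-inequality decomposition rather than a gradient bound), so your argument is correct and matches the paper.
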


\paragraph{Sample complexity}

Theorem~\ref{thm:approx} guarantees uniform approximation up to error $\epsilon$ 
using 
$m = 
\Omega\left(\frac{l \kappa^2}{\epsilon^2} \log \frac{C}{\epsilon} \right)$
features.
This is precisely the same dependence on $l$ and $\epsilon$ as for
random Fourier features.

A limitation of Theorem~\ref{thm:approx} is that it only shows approximation
of the limiting kernel rather than direct approximation of functions
in the RKHS. 
A more detailed analysis of the convergence to RKHS 
is contained in the work of \citet{bach2017a},
whereas
\citet{rudi2017} directly analyze the generalization ability of these approximations.
\citet{sun2018a}
show even faster rates which also apply to SVMs,
 assuming that the features are compatible
(``optimized'') for the learning problem.
Also, the techniques of \citet{sutherland2015} and \citet{sriperumbudur2015}
could be used to improve our constants and prove convergence in other $L^p$ norms.

In the sparse case,
we must extend our probability space
to capture the randomness of 
(1) the degrees, (2) the neighborhoods conditional on the degree, and
(3) the weight vectors conditional on the degree and neighborhood.
The degrees are distributed independently according to $d_i \sim D$,
with some abuse of notation since we also use $D(d)$ to represent the probability mass function.
We shall always think of the neighborhoods $\mathcal{N} \sim \nu|d$ as chosen uniformly among all
$d$ element subsets, where $\nu|d$ represents this conditional distribution.
Finally, given a neighborhood of some degree, 
the nonzero weights and bias are drawn from a distribution 
$(\vec w, b) \sim \mu| d$
on $\R^{d+1}$.
For simpler notation, we do not show any dependence on the neighborhood here,
since we will always take the actual weight values 
to not depend on the particular neighborhood $\mathcal{N}$.
However, strictly speaking, the weights do depend on $\mathcal{N}$
because that determines their support.
Finally, 
we use $\E$ to denote expectation over all variables (degree, neighborhood, and weights),
whereas we use $\E_{\mu|d}$ for the expectation under $\mu|d$ for a given degree.

\begin{corollary}[Kernel approximation with sparse features]
  Assume that $\vec{x} \in \mathcal{X} \subset \R^l$
  and that $\mathcal{X}$ is compact, $\Delta = \mathrm{diam}(\mathcal{X})$, and 
  the null vector $0 \in \mathcal{X}$.
  Let the degrees $d$ follow the degree distribution $D$ on $[l]$.
  For every $d \in [l]$, 
  let $\mu | d$ denote the conditional distributions for $(\vec w, b)$ 
  on $\R^{d +1}$
  and assume that these have finite second moments.
  Let
  $h(\cdot)$ be a
  nonlinearity which is $L$-Lipschitz continuous,
  and define the random feature 
  $\phi : \R^l \to \R$
  by 
  $\phi(\vec x) = h ( \vec w^\intercal \vec x - b)$,
  where $\vec w$ follows the degree distribution model.
  We assume that
  the following hold for all 
  $\vec x_\mathcal{N} \in \mathcal{X}_\mathcal{N}$
  with $|\mathcal{N}| = d$,
  and for all $1 \leq d \leq l$:
  $|\phi(\vec x_\mathcal{N})|^2 \leq \kappa$ almost surely under $\mu | d$,
  $\E \left[ |\phi(\vec x_\mathcal{N})|^2 | d \right] < \infty$, and
  $\E [ \phi(\vec x_\mathcal{N}) \phi(\vec x_\mathcal{N}') | d]
    = k_d^\mathrm{reg} (\vec x_\mathcal{N}, \vec x_\mathcal{N}')$.

  Then 
  $
  \sup_{\vec x, \vec x' \in \mathcal{X}}
  \left| 
    \frac{1}{m} \bm{\phi}(\vec x)^\intercal \bm{\phi} (\vec x') 
    -
    k_D^\mathrm{dist} (\vec x, \vec x')
  \right| 
  \leq 
  \epsilon,
  $
  with probability
  at least
  \[  
  1 
  - 
  2^8 \left( \frac{\kappa L \Delta \sqrt{\E \|\vec w \|^2 + 3 (\E \|\vec w \|)^2}}{\epsilon} \right)^2 
  \exp \left( \frac{-m \epsilon^2}{4 (2 l + 2) \kappa^2} \right).
  \]
  The kernels
  $k_d^\mathrm{reg} (\vec z, \vec z')$
  and
  $k_D^\mathrm{dist} (\vec x, \vec x')$ 
  are given by equations \eqref{eq:K_reg} and \eqref{eq:K_deg_dist}.
  \label{cor:approx}
\end{corollary}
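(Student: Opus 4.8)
The plan is to recognize that sparse random features are nothing but ordinary random features on $\R^l$ whose weight vectors happen to be drawn from a sparse, mixed distribution, so that the entire statement reduces to Theorem~\ref{thm:approx}. First I would make the ``extended probability space'' described above explicit by defining the marginal weight distribution $\tilde\mu$ on $\R^{l+1}$: draw a degree $d \sim D$, then a neighborhood $\Nei \sim \nu|d$ uniformly among the $d$-element subsets of $[l]$, then the nonzero coordinates $(\vec w_\Nei, b) \sim \mu|d$ with $w_j = 0$ for $j \notin \Nei$. Under $\tilde\mu$ the feature $\phi(\vec x) = h(\vec w^\intercal \vec x - b)$ is exactly the sparse feature of the corollary, and the task becomes verifying that $(\tilde\mu, h, \phi)$ satisfies the hypotheses of Theorem~\ref{thm:approx}.

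Next I would check each hypothesis in turn. The nonlinearity $h$ is $L$-Lipschitz by assumption. The almost-sure bound $|\phi(\vec x)| \leq \kappa$ holds $\tilde\mu$-almost surely because it holds under each $\mu|d$, and likewise $\E_{\tilde\mu}|\phi(\vec x)|^2 < \infty$ follows by averaging the conditional second moments over the finitely many degrees $d \in [l]$. For the same reason $\tilde\mu$ has finite second moments: $\E_{\tilde\mu}\|\vec w\|^2 = \sum_d D(d)\, \E_{\mu|d}\|\vec w_\Nei\|^2 < \infty$. The crucial identity is the mean of the feature product. By the tower property, conditioning first on the degree and neighborhood gives $\E[\phi(\vec x)\phi(\vec x') \mid d] = k_d^\mathrm{reg}(\vec x, \vec x')$ via \eqref{eq:K_reg}, and averaging over $d \sim D$ yields $\E_{\tilde\mu}[\phi(\vec x)\phi(\vec x')] = \sum_{d} D(d)\, k_d^\mathrm{reg}(\vec x, \vec x') = k_D^\mathrm{dist}(\vec x, \vec x')$ by \eqref{eq:K_deg_dist}. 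Thus the limiting kernel is precisely $k_D^\mathrm{dist}$.

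With the hypotheses verified, I would apply Theorem~\ref{thm:approx} with weight distribution $\tilde\mu$. The probability bound produced is identical to the one in the theorem because it depends only on $\kappa$, $L$, the diameter $\Delta$ of $\mathcal{X} \subset \R^l$, and the moments $\E\|\vec w\|^2$ and $(\E\|\vec w\|)^2$, all now interpreted under $\tilde\mu$; this explains why the stated bound coincides verbatim with that of Theorem~\ref{thm:approx}.

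The main obstacle is conceptual rather than computational: one must be careful that the moment and Lipschitz quantities appearing in the theorem's bound are the correct marginal quantities. The theorem's proof controls feature differences through $|\phi(\vec x) - \phi(\vec x')| \leq L \|\vec w\|\, \|\vec x - \vec x'\|$, where $\|\vec w\|$ is the full $\ell^2$-norm of the (sparse) weight vector; since a sparse $\vec w$ has the same $\ell^2$-norm as its restriction $\vec w_\Nei$, the covering-number argument of Theorem~\ref{thm:approx} goes through with $\E_{\tilde\mu}\|\vec w\|^2$ in place of the dense second moment. One should also note that each feature depends on only $d \leq l$ coordinates, so using the full diameter $\Delta$ of $\mathcal{X}$ is a valid (if possibly loose) upper bound, and no sharpening is required to obtain the stated rate.
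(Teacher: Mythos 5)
Your proposal is correct and follows essentially the same route as the paper: the paper's proof likewise reduces the corollary to Theorem~\ref{thm:approx} by noting that conditions (1)--(2) hold because $D$ has finite support, and that condition (3) follows from the tower property $\E\,\phi(\vec x)\phi(\vec x') = \E[\E[\phi(\vec x_\Nei)\phi(\vec x'_\Nei)\,|\,d]] = k_D^\mathrm{dist}(\vec x,\vec x')$. Your additional care in constructing the marginal distribution $\tilde\mu$ explicitly and in noting that the moment terms in the bound are now taken under this mixture is a welcome clarification, but it is the same argument.
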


\begin{proof}
  It suffices to show that conditions (1--3) on the 
  conditional distributions
  $\mu | d$, $d \in [l]$,
  imply conditions (1--3) in Theorem~\ref{thm:approx}.
  Conditions (1) and (2) clearly hold,
  since the distribution $D$ has finite support.
  By construction, 
  $ \E \, \phi(\vec x) \phi(\vec x')
  = \E [ \E [ \phi(\vec x_\mathcal{N}) \phi(\vec x'_\mathcal{N}) | d ] ]
  = \E [ k_d^\mathrm{reg}(\vec x_\mathcal{N}, \vec x'_\mathcal{N}) ]
  =  k_D^\mathrm{dist} (\vec x, \vec x')$, 
  which concludes the proof.
\end{proof}

\paragraph{Differences of sparsity}
The only difference we find with sparse random features is in the
terms $\E \| \vec w \|^2$ and $\E \| \vec w \|$, since
sparsity adds variance to the weights.
This suggests that scaling the weights so that $\E_{\mu | d} \| \vec w \|^2$
is constant for all $d$ is a good idea.
For example, setting
$(\vec w_i)_{\mathcal{N}_i} \sim N(0, \sigma^2 d_i^{-1} \mat{I}_{d_i})$,
the random variables 
$\| \vec w_i \|^2 \sim \sigma^2 d_i^{-1} \chi^2 (d_i)$
and 
$\| \vec w_i \| \sim \sigma d_i^{-1/2} \chi (d_i)$.
Then $\E \| \vec w_i \|^2 = \sigma^2$
irregardless of $d_i$ 
and 
$\E \| \vec w_i \| = \sigma (1 + o(d_i))$.
With this choice, the number of sparse features 
needed to achieve an error $\epsilon$  is the same as in the dense case,
up to a small constant factor.
This is perhaps remarkable since there could be as many as 
$\sum_{d=0}^l \binom{l}{d} = 2^l$
terms in the expression of $k_D^\mathrm{dist} (\vec x, \vec x')$.
However, the random feature expansion does not need to approximate all of these terms
well, just their average.

\begin{proof}[Proof of Theorem~\ref{thm:approx}]
  We follow the approach of Claim 1 in \citep{rahimi2008},
  a similar result for random Fourier features but 
  which crucially uses the fact that the trigonometric functions are differentiable and bounded.
  For simplicity of notation, let $\bm\xi = (\vec x, \vec x')$ 
  and define the {\em direct sum norm} on 
  $\mathcal{X}^+ = \mathcal{X} \oplus \mathcal{X}$ as $\| \bm\xi \|_+ = \| \vec x\| + \| \vec x' \|$.
  Under this norm $\mathcal{X}^+$ is a Banach space but not a Hilbert space,
  however this will not matter.
  For $i=1, \ldots, m$, let
  \begin{align*}
    f_i (\bm\xi) &= \phi_i(\vec x) \phi_i( \vec x'), \\
    g_i (\bm\xi) &= \phi_i(\vec x) \phi_i( \vec x') - k(\vec x, \vec x') \\
    &= f_i(\bm\xi) - \E f_i(\bm\xi),
  \end{align*}
  and note that these $g_i$ are i.i.d., centered random variables.
  By assumptions (1) and (2),
  $f_i$ and $g_i$ are absolutely integrable and 
  $k(\vec x, \vec x') = \E \, \phi_i(\vec x) \phi_i (\vec x')$.
  Denote their mean by
  \begin{align*}
 \bar{g}(\bm\xi) 
    = \frac{1}{m} \bm{\phi}(\vec x)^\intercal \bm{\phi} (\vec x') - k(\vec x, \vec x')
    = \frac{1}{m} \sum_{i=1}^m g_i(\bm\xi) .
  \end{align*}
  Our goal is to show that 
  $|\bar{g} (\bm\xi)| \leq \epsilon$ for all $\bm\xi \in \mathcal{X}^+$ 
  with sufficiently high probability.

  The space $\mathcal{X}^+$ is compact and $2l$-dimensional, 
  and it has diameter at most twice the diameter of $\mathcal{X}$ under the sum norm.
  Thus we can cover $\mathcal{X}^+$ 
  with an $\epsilon$-net using at most 
  $T = (4\Delta/ R )^{2l}$ balls of radius $R$.
  Call the centers of these balls $\bm\xi_i$ for $i = 1, \ldots, T$,
  and let $\bar{L}$ denote the Lipschitz constant of $\bar{g}$ with respect to the sum norm.
  Then we can show that $| \bar{g}(\bm\xi) | \leq \epsilon$ for all 
  $\bm\xi \in \mathcal{X}^+$ 
  if we show that
  \begin{enumerate}
  \item $\bar{L} \leq \frac{\epsilon}{2R}$, and
  \item $|\bar{g}(\bm\xi_i) | \leq \frac{\epsilon}{2}$ for all $i$.
  \end{enumerate}

  First, we bound the Lipschitz constant of $g_i$ with respect to the sum norm 
  $\| \cdot \|_+$.
  Since $h$ is $L$-Lipschitz, we have that $\phi_i$ is Lipschitz with constant 
  $L \|\vec w_i\|$.
  Thus, letting
  $\bm \xi' = \bm\xi + (\bm\delta,\bm\delta')$,
  \begin{align*}
    2|f_i(\bm\xi) - f_i(\bm\xi')| 
    &\leq 
      |\phi_i(\vec x+\bm\delta) \phi_i(\vec x' + \bm\delta')
      - 
      \phi_i(\vec x + \bm\delta) \phi_i(\vec x')|  \\
    & \quad +      |\phi_i(\vec x+\bm\delta) \phi_i(\vec x' + \bm\delta')
      - 
      \phi_i(\vec x) \phi_i(\vec x' + \bm\delta')| \\
    & \quad +      |\phi_i(\vec x+\bm\delta) \phi_i(\vec x')
      - 
      \phi_i(\vec x) \phi_i(\vec x')| \\
    & \quad +      |\phi_i(\vec x) \phi_i(\vec x' + \bm\delta')
      - 
      \phi_i(\vec x) \phi_i(\vec x')| \\
    & \leq 2 L \| \vec w_i \| \cdot
    \sup_{\vec x \in \mathcal{X}}  | \phi_i(\vec x)|
      \cdot  ( \| \bm\delta \| + \| \bm\delta' \| ) 
    \\
    & = 2 \kappa L \| \vec w_i \| \cdot \| \bm \xi - \bm\xi' \|_+,
  \end{align*}
  we have that $f_i$ has Lipschitz constant $\kappa L \|\vec w_i \|$.
  This implies that $g_i$ has Lipschitz constant $\leq \kappa L (\| \vec w_i \| + \E \|\vec w\|)$.
  
  Let $\bar{L}$ denote the Lipschitz constant of $\bar{g}$.
  Note that $\E \bar{L} \leq 2 \kappa L \E \| \vec w \|$.
  Also, 
  \begin{align*}
    \E \bar{L}^2 
    &\leq L^2 \kappa^2 \E \left( \| \vec w \| + \E \| \vec w \| \right )^2 \\
    & = L^2 \kappa^2 \left( \E \| \vec w \|^2 + 3 ( \E \| \vec w \| )^2 \right).
  \end{align*}
  Markov's inequality states that
  $\P [ \bar{L}^2 > t^2 ] \leq \E [ \bar{L}^2 ]/ t^2$.
  Letting $t = \frac{\epsilon}{2 R}$, we find that
  \begin{equation}
    \label{eq:Lipschitz_bnd}
    \P [ \bar{L} > t  ] 
    = 
    \P \left[ \bar{L} > \frac{\epsilon}{2R}  \right] 
    \leq 
    L^2 \kappa^2 \left( \E \| \vec w \|^2 + 3 ( \E \| \vec w \| )^2 \right) 
    \left( \frac{2R}{\epsilon} \right)^2.
  \end{equation}

  Now we would like to show that $|\bar{g}(\bm\xi_i)| \leq \epsilon /2$ for all
  $i = 1, \ldots, T$
  anchors in the $\epsilon$-net.
  % Note that $\phi(\vec x; \vec w) = h(\vec w^\intercal \vec x - b)$
  % is Lipschitz in the random vector $\vec w$ with constant $L\|\vec x\|$.
  % Since $0 \in \mathcal{X}$, we can upper bound the constant with $L\Delta$.
  % Likewise, the product $f(\bm\xi; \vec w) = h(\vec w^\intercal \vec x - b)h(\vec w^\intercal \vec x' - b)$
  % is Lipschitz in $\vec w$ with constant $2 L\Delta$ under 
  % the $\| \cdot \|$ norm.
  % \todo{Use better concentration here}
  A straightforward application of Hoeffding's inequality and a union bound shows that 
  \begin{equation}
    \label{eq:Hoeffding_bnd}
    \P \left[ |\bar{g}(\bm\xi_i) | > \frac{\epsilon }{2} \mbox{ for all $i$}  \right] 
    \leq 
    2 T \exp \left( \frac{- m \epsilon^2}{8 \kappa^4} \right) ,
  \end{equation}
  since $|f_i(\bm\xi)| \leq \kappa^2$.

  Combining equations \eqref{eq:Lipschitz_bnd} and \eqref{eq:Hoeffding_bnd} results in 
  a probability of failure
  \begin{align}
  \P \left[ \sup_{\bm\xi \in \mathcal{X}^+} |\bar{g}(\bm\xi)| \geq \epsilon  \right]
    & \leq 
      2 \left( \frac{4\Delta}{R} \right)^{2l} 
      \exp \left( \frac{- m \epsilon^2}{8 \kappa^2} \right)
      + 
      L^2 \kappa^2 ( \E \| \vec w \|^2 + 3 ( \E \| \vec w \|)^2 ) \left( \frac{2R}{\epsilon} \right)^2 \nonumber \\
    &= a R^{-2l} + b R^2 . \label{eq:pr_failure}
  \end{align}
  % where 
  % \[
  % a = 2(4 \Delta)^{2l}\exp( - m \epsilon^2 /(8 \kappa^2))
  % \]
  % and
  % \[
  % b = 4 L^2 (\sigma^2 + 3 \mu^2) \epsilon^{-2}
  % \]
  Set $R = (a/b)^{\frac{1}{2l+2}}$, so that the probability \eqref{eq:pr_failure} has the form,
  $1 - 2 a^{\frac{2}{2l+2}} b^{\frac{2l}{2l+2}}$.
  Thus the probability of failure satisfies
  \begin{align*}
    \Pr \left[ \sup_{\bm\xi \in \mathcal{X}^+} |\bar{g}(\bm\xi)| \geq \epsilon  \right] 
    &\leq 
    2 a^{\frac{2}{2l+2}} b^{\frac{2l}{2l+2}}\\
    &= 2 \cdot 2^{\frac{2}{2l+2}} 
      \left( \frac{8 \kappa L \Delta
      \sqrt{ \E \| \vec w \|^2 + 3 ( \E \| \vec w \|)^2 }}{\epsilon} \right)^{\frac{4l}{2l+2}}
      \exp \left( \frac{-m \epsilon^2}{4 (2 l + 2) \kappa^2} \right) \\
    &\leq 2^8 \left(
      \frac{ \kappa L \Delta \sqrt{\E \| \vec w \|^2 + 3 ( \E \| \vec w \|)^2}}{\epsilon} 
      \right)^2
      \exp \left( \frac{-m \epsilon^2}{4 (2 l + 2) \kappa^2} \right),
  \end{align*}
  for all $l \in \mathbb{N}$, 
  assuming $\Delta \kappa L \sqrt{\E \| \vec w \|^2 + 3 ( \E \| \vec w \|)^2} > \epsilon$.
  Considering the complementary event concludes the proof.
\end{proof}

{
\setlength{\bibsep}{0pt plus 0.3ex}
\small
\bibliographystyle{unsrtnat}
\bibliography{Library}
}

\end{document}